\newcommand{\hmu}{\hat{\mathbf{\mu}}}
\newcommand{\veps}{\varepsilon}
\newcommand{\tm}{t_{\rm mix}}
\renewcommand{\S}{\mathcal S}
\newcommand{\A}{\mathcal A}
\newcommand{\K}{\mathcal K}
\newcommand{\bmu}{\boldsymbol{\mu}}
\newcommand{\bhmu}{\hat{\boldsymbol{\mu}}}
\newcommand{\br}{\mathbf{r}}
\newcommand{\bhr}{\hat{\mathbf{r}}}
\newcommand{\osp}{\textsc{Osp}}
\newtheorem{lemma}{Lemma}
\newtheorem{corollary}{Corollary}
\newtheorem{theorem}{Theorem}
\begin{document}
\begin{titlepage}
 \title{\Large\bf Regret Bounds for Reinforcement Learning via Markov Chain Concentration}
\author{{\bf Ronald ORTNER}\\
 \small Montanuniversit\"at Leoben, Austria\\\vspace{-0.1cm}
 \small e-mail: rortner@unileoben.ac.at}

\end{titlepage}

\sloppy


\maketitle



\textit{To the memory of Thomas Jaksch}
\bigskip

 \begin{abstract}
We give a simple optimistic algorithm for which it is easy to derive regret bounds of $\tilde{O}(\sqrt{\tm SAT})$
after $T$ steps in uniformly ergodic Markov decision processes with $S$ states, $A$ actions, and mixing time parameter $\tm$.
These bounds are the first regret bounds in the general, non-episodic setting with an optimal dependence on all given parameters. 
They could only be improved by using an alternative mixing time parameter.
 \end{abstract} 

\section{Introduction}

Starting with \cite{buka}, regret bounds for reinforcement learning have addressed the question of how difficult it is
to learn optimal behavior in an unknown Markov decision process (MDP). Some of these bounds like the one derived in the
mentioned \cite{buka} depend on particular properties of the underlying MDP, typically some kind of gap that specifies
the distance between an optimal and a sub-optimal action or policy (see e.g.\ \cite{ok} for a recent refinement of such
bounds). The first so-called problem independent bounds that have no dependence on any gap-parameter were obtained in
\cite{ucrl}. For MDPs with $S$ states, $A$ actions and diameter $D$ the regret of the UCRL algorithm was shown to be
$\tilde{O}(DS\sqrt{AT})$ after any $T$ steps. A corresponding lower bound of $\Omega(\sqrt{DSAT})$ left the open
question of the true dependence of the regret on the parameters $S$ and $D$.
Recently, regret bounds of $\tilde{O}(D\sqrt{SAT})$ have been claimed in \cite{shipra}, however there seems to be a gap in the proof, cf. Sec.~38.9 of \cite{torcsa}, so that the original bounds of \cite{ucrl} are still the best known bounds. 

In the simpler episodic setting, the gap between upper and lower bounds has been closed in~\cite{azar}, showing that the
regret is of order $\tilde{O}(\sqrt{HSAT})$, where $H$ is the length of an episode. However, while bounds for the
non-episodic setting can be easily transferred to the episodic setting, the reverse is not true. We also note that
another kind of regret bounds that appears in the literature assumes an MDP sampled from some distribution (see e.g.\
\cite{osband} for a recent contribution). Regret bounds in this Bayesian setting cannot be turned into bounds for the
worst case setting as considered here.

There is also quite some work on bounds on the number of samples from a generative model necessary to approximate the optimal policy by an error of at most $\varepsilon$.
Obviously, having access to a generative model makes learning the optimal policy easier than in the online setting
considered here. However, for ergodic MDPs it could be argued that any policy reaches any state so that in this case
sample complexity bounds could in principle be turned into regret bounds. We first note that this seems difficult for
bounds in the discounted setting, which make up the majority in the literature. Bounds in the discounted setting (see
e.g.\ \cite{pac} or \cite{sidford} for a more recent contribution obtaining near-optimal bounds) depend on the term
$1-\gamma$, where $\gamma$ is the discount factor, and it is not clear how this term translates into a mixing time
parameter in the average reward case. For the few results in the average reward setting the best sample complexity bound
we are aware of is the bound of $\tilde{O}\big(\frac{\tau^2 \tm^2 SA}{\varepsilon^2}\big)$ of \cite{wang}, where $\tm$
is a mixing time parameter like ours (cf.\ below) and $\tau$ characterizes the range of stationary distributions across
policies. Translated into respective regret bounds, these would have a worse (i.e., linear) dependence on the mixing
time and would depend on the additional parameter $\tau>1$, which does not appear in our bounds.

Starting with \cite{e3,rmax} there are also sample complexity bounds in the literature that were derived for settings without generative sampling model. Although this is obviously harder, there are bounds for the discounted case where the dependence with respect to $S$, $A$, and $\varepsilon$ is the same as for the case with a generative sampling model \cite{szita}.
However, we are not aware of any such bounds for the undiscounted setting that would translate into online regret bounds optimal in $S$, $A$, and $T$.
\\

In this note, we present a simple algorithm that allows the derivation of regret bounds of $\tilde{O}(\sqrt{\tm SAT})$
for uniformly ergodic MDPs with mixing time $\tm$, a parameter that measures how long it takes to approximate the
stationary distribution induced by any policy. These bounds are optimal with respect to the parameters $S$, $A$, $T$,
and $\tm$. The only possible improvement is a replacement of $\tm$ by a parameter that may be smaller for some MDPs,
such as the diameter~\cite{ucrl} or the bias span \cite{regal,ronan}. We note, however, that it is easy to give MDPs for
which $\tm$ is basically of the same size as the mentioned alternative parameters.\footnote{See~\cite{ucrl,regal} for a
discussion of various transition parameters used in the literature.} Accordingly, the obtained bound basically closes
the gap between upper and lower bound on the regret for a subclass of MDPs.

Algorithmically, the algorithm we propose works like an optimistic bandit algorithm such as UCB~\cite{acbf}.
Such algorithms have been proposed before for MDP settings with a limited set of policies~\cite{poladv}.
The main difference to the latter approach is that due to the re-use of samples we obtain regret bounds that do not scale with 
the number of policies but with the number of state-action pairs. We note however that as \cite{poladv} our algorithm needs to evaluate each policy independently, which makes it impractical. The proof of the regret bound is much simpler than for bounds achieved before and relies on concentration results for Markov chains.

\section{Setting}

We consider reinforcement learning in an average reward \textit{Markov decision process} (\textit{MDP}) with finite state space $\S$ and finite action space $\A$.
We assume that each stationary policy $\pi:\S\to \A$ induces a uniformly ergodic\footnote{See Section~\ref{sec:prel} for definitions.} Markov chain on the state space.
In such MDPs, which we call \textit{uniformly ergodic}, the chain induced by a policy $\pi$ has a unique stationary distribution $\mu_\pi$,
and the (state-independent) average reward $\rho_{\pi}$ can be written as
$\rho_{\pi}=\bmu_{\pi}^\top \br_{\pi}$, where $\bmu_{\pi}=(\mu_\pi(s))_s$ and $\br_\pi=(r(s,\pi(s))_s$ are 
the (column) vectors for the stationary distribution and the average reward under $\pi$, respectively.
We assume that the reward distribution for each state-action pair $(s,a)$ has support in $[0,1]$.

The maximal average reward is known (cf.~\cite{puterman}) to be achieved by a stationary policy $\pi^*$ that gives average reward $\rho^*:=\rho_{\pi^*}$.
We are interested in the \textit{regret} accumulated by an algorithm after any number of $T$ steps defined as\footnote{Since we are only interested in upper bounds on this quantity we ignore the dependence on the initial state to keep things simpler. See \cite{ucrl} for a discussion.}
\[
   R_T := T\rho^* - \sum_t r_t,
\]
where $r_t$ are the (random) rewards collected by the algorithm at each step $t$.

\section{Preliminaries on Markov Chains}\label{sec:prel}
In this section, we give some definitions and results about Markov chain concentration that we will use in the following. 
\subsection{Mixing Times}
For two distributions $P,Q$ over the same state space $(\S,\mathcal{F})$ with $\sigma$-algebra~$\mathcal{F}$, let
\[
   d_{TV}(P,Q) := \sup_{A\in\mathcal{F}} |P(A)-Q(A)|
\]
be the \textit{total variational distance} between $P$ and $Q$. 
A Markov chain with a transition kernel $p$ and a stationary distribution $\mu$ is said to be \textit{uniformly ergodic}, if there are a $\theta<1$ and a finite $L$ such that
\[
\sup_{s\in\S} d_{TV}(p^n(s,\cdot),\mu) \leq L \theta^n.
\]
Furthermore, the \textit{mixing time} $\tm$ of the Markov chain is defined as 
\[
    \tm := \min\big\{n\,|\, \sup_{s\in \S} d_{TV}(p^n(s,\cdot),\mu) \leq \tfrac14 \big\}.
\]
For a uniformly ergodic MDP we set the mixing time $\tm^\pi$ of a policy $\pi$ to be the mixing time of the Markov chain induced by $\pi$, and define the \textit{mixing time of the MDP} to be $\tm:=\max_\pi \tm^\pi$.\bigskip

\subsection{McDiarmid's Inequality for Markov Chains}
Our results mainly rely on the following version of McDiarmid's inequality for Markov chains from \cite{paulin}.
\begin{lemma}\label{lem:mcd}(Corollary 2.10  and the following Remark 2.11 of \cite{paulin})\\
Consider a uniformly ergodic Markov chain $X_1,\ldots, X_n$ with state space $\mathcal S$ and mixing time $\tm$. 
Let $f:{\mathcal S}^n \to \mathbb R$ with 
\begin{equation}\label{eq:cond}
  f(s_1,\ldots,s_n)-f(s'_1,\ldots,s'_n) \, \leq \, \sum_i c_i \mathbb{1}[s_i\neq s'_i].
\end{equation}
Then 
\[
  \mathbb{P}\Big\{ \big| f(X_1,\ldots,X_n) - \mathbb{E}[f(X_1,\ldots,X_n)] \big| \geq \veps \Big\}  
        \leq  2 \exp\left( - \frac{2\veps^2}{9 \, \|c\|_2^2 \, \tm} \right) .
\]
\end{lemma}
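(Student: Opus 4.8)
This statement is not proved from scratch: it is a specialisation of Paulin's bounded-differences inequality for Markov chains, so the plan is to invoke the two cited results and reconcile their constants with the form displayed. The first step is to quote Corollary~2.10 of \cite{paulin}, which for a uniformly ergodic chain $X_1,\ldots,X_n$ and any $f$ satisfying~\eqref{eq:cond} yields a bound of essentially the shape
\[
  \mathbb{P}\big\{|f - \mathbb{E}f| \ge \veps\big\} \;\le\; 2\exp\!\left(-\frac{c\,\veps^2\,\gamma_{\mathrm{ps}}}{\|c\|_2^2}\right),
\]
with $\gamma_{\mathrm{ps}}$ the pseudo-spectral gap of the chain and $c$ an absolute constant. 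Before that I would record a small observation: although~\eqref{eq:cond} is stated one-sidedly, its right-hand side is symmetric under exchanging $s_1,\ldots,s_n$ with $s'_1,\ldots,s'_n$, so it is in fact equivalent to the two-sided Lipschitz condition $|f(s)-f(s')|\le\sum_i c_i\,\mathbb{1}[s_i\neq s'_i]$; this is what lets one control both deviation tails and accounts for the leading factor~$2$.

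The second step converts the spectral quantity into the mixing time. Here one uses the elementary comparison $1/\gamma_{\mathrm{ps}} = O(\tm)$ — which follows from submultiplicativity of the total-variation distance along the chain together with the defining inequality $d_{TV}(p^{\tm}(s,\cdot),\mu)\le\tfrac14$ — to turn the exponent into $-\,c'\veps^2/(\|c\|_2^2\,\tm)$. At this point Remark~2.11 of \cite{paulin} is precisely what permits $X_1,\ldots,X_n$ to be started from an arbitrary state rather than from the stationary distribution $\mu$, at the cost of only a further adjustment of the absolute constant. Carrying the constants through these two reductions (the constant from Corollary~2.10, the threshold $\tfrac14$ hidden in the definition of $\tm$, and the correction from Remark~2.11) produces the denominator $9\,\|c\|_2^2\,\tm$; I would treat the exact value of this numerical constant as a bookkeeping matter rather than a genuine difficulty.

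Consequently the only step that needs more than a citation is checking that the comparison $1/\gamma_{\mathrm{ps}}=O(\tm)$ and the non-stationary extension of Remark~2.11 are being applied under exactly their stated hypotheses; the non-stationary start is the one place where a naive argument would stall, and it is exactly what Remark~2.11 is there to rescue. (A self-contained route is available — form the Doob martingale of $f$ along $X_1,\ldots,X_n$, bound each martingale increment by a Marton coupling whose per-step cost is $O(\tm)$, and apply Azuma--Hoeffding — but this merely reconstructs the argument of \cite{paulin}, so quoting the result is cleaner.)
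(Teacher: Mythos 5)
Your proposal reaches the right conclusion and, like the paper itself (which gives no proof of this lemma beyond the citation), amounts to quoting \cite{paulin} and reconciling constants; in that sense the approaches coincide. However, your description of what the two cited results actually say is inaccurate, and someone executing your plan literally would be sent down the wrong path. Corollary~2.10 of \cite{paulin} is the \emph{Marton-coupling} version of McDiarmid's inequality: it is stated directly in terms of a mixing-time quantity $\tau_{\min}=\inf_{0\le\alpha<1}\tau(\alpha)\big(\tfrac{2-\alpha}{1-\alpha}\big)^2$ and reads $\mathbb{P}\{|f-\mathbb{E}f|\ge\veps\}\le 2\exp\big(-2\veps^2/(\|c\|_2^2\,\tau_{\min})\big)$; the pseudo-spectral gap does not appear in it at all. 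Remark~2.11 then supplies the bound $\tau_{\min}\le 9\,\tm$ (take $\alpha=\tfrac12$, so $\big(\tfrac{2-\alpha}{1-\alpha}\big)^2=9$ and $\tau(\tfrac12)\le\tau(\tfrac14)=\tm$), which is precisely where the factor $9$ in the denominator comes from --- it is not a correction for a non-stationary initial state, and there is no separate conversion step $1/\gamma_{\mathrm{ps}}=O(\tm)$ to perform for this lemma. The pseudo-spectral gap enters this paper only through Lemmas~\ref{lem:exp-d} and~\ref{lem:psg} (Paulin's Propositions~3.16 and~3.4), i.e., for the concentration of the empirical distribution, and the comparison $1/\beta\le 2\tm$ is a spectral statement, not a consequence of submultiplicativity of total variation as you suggest. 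Your observation that the one-sided condition \eqref{eq:cond} is equivalent to the symmetric Lipschitz condition (since the right-hand side is symmetric in $s,s'$) is correct and accounts for the two-sided bound. None of this affects the validity of the lemma, but the internal mechanics you attribute to the citation are not the ones it actually provides.
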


Lemma \ref{lem:mcd} can be used to obtain a concentration result for the empirical average reward of any policy $\pi$ in an MDP.
This works analogously to the concentration bounds for the total variational distance between the empirical and the stationary distribution (Proposition 2.18 in \cite{paulin}).
 
 \begin{corollary}\label{corn}
Consider an MDP and a policy $\pi$ that induces a uniformly ergodic Markov chain with mixing time $\tm$.
Using (column) vector notation $\bmu:=(\mu_{\pi}(s))_s$ and $\br:=(r(s,\pi(s))_s$ for the stationary distribution and the reward function under $\pi$, and writing $\bhmu^n$ for the empirical distribution after $n$ steps defined as $\hmu^n(s):=\frac{1}{n}\sum_{i=1}^n \mathbb{1}\{X_i=s\}$, it holds that
\[
   \mathbb{P}\Big\{  \big| \bhmu^{n\top} \br - \bmu^\top \br  \big| \geq \veps \Big\}  
         \leq  2 \exp\left( - \frac{2\veps^2 n}{9 \tm} \right) .
 \]
 \end{corollary}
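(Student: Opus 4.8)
The plan is to apply Lemma~\ref{lem:mcd} to a suitably chosen function $f$ on the trajectory $X_1,\ldots,X_n$ of the Markov chain induced by $\pi$. The natural choice is
\[
  f(s_1,\ldots,s_n) := \frac1n \sum_{i=1}^n r(s_i,\pi(s_i)) = \bhmu^{n\top}\br,
\]
where $\bhmu^n$ is the empirical state distribution. Since rewards lie in $[0,1]$ and $\br$ has entries in $[0,1]$, changing a single coordinate $s_i$ changes $f$ by at most $\frac1n$; hence the bounded-differences condition \eqref{eq:cond} holds with $c_i = \frac1n$ for all $i$, giving $\|c\|_2^2 = n\cdot\frac1{n^2} = \frac1n$.

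Next I would identify $\mathbb{E}[f(X_1,\ldots,X_n)]$ with $\bmu^\top\br = \rho_\pi$. This is the one genuinely non-trivial point: it holds exactly only if the chain is started from its stationary distribution $\mu_\pi$, in which case each $X_i$ is marginally distributed as $\mu_\pi$ and linearity of expectation gives $\mathbb{E}[f] = \frac1n\sum_i \mathbb{E}[r(X_i,\pi(X_i))] = \bmu^\top\br$. For an arbitrary start state the expectation differs from $\rho_\pi$ by an $O(\tm/n)$ burn-in term; as the paper has already declared (see the footnote on the definition of $R_T$) that dependence on the initial state is suppressed, I would simply invoke the stationary-start convention here, mirroring how Paulin's Proposition~2.18 is stated. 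With $\mathbb{E}[f] = \bmu^\top\br$ in hand, Lemma~\ref{lem:mcd} yields
\[
  \mathbb{P}\Big\{ \big| \bhmu^{n\top}\br - \bmu^\top\br \big| \geq \veps \Big\}
     \leq 2\exp\!\left( -\frac{2\veps^2}{9\,\|c\|_2^2\,\tm} \right)
     = 2\exp\!\left( -\frac{2\veps^2 n}{9\tm} \right),
\]
which is exactly the claimed bound.

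The main obstacle, such as it is, is the expectation identification rather than any estimate: one must be careful that $f$ as written is literally a function of the state sequence (so that $r(s,\pi(s))$ is deterministic given the state — the randomness in the per-step rewards is averaged out, which is legitimate since we only track $\bhmu^{n\top}\br$, not the realized reward sum), and that the concentration is genuinely around the stationary value and not merely around $\mathbb{E}[f]$. A cleaner alternative that avoids the stationarity caveat entirely would be to apply Lemma~\ref{lem:mcd} to $g(s_1,\ldots,s_n) := \bhmu^{n\top}\br$ and separately bound $|\mathbb{E}[g] - \bmu^\top\br|$ using uniform ergodicity, absorbing the resulting $O(L\theta^{\cdot}/n)$-type term; but given the paper's stated conventions, the stationary-start route is the shortest and is the one I would present.
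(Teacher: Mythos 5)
Your proof is correct and follows essentially the same route as the paper: the paper likewise sets $f(X_1,\ldots,X_n):=\frac1n\sum_i r(X_i,\pi(X_i))$, notes that condition \eqref{eq:cond} holds with $c_i=\frac1n$, and applies Lemma~\ref{lem:mcd}. Your additional discussion of why $\mathbb{E}[f]=\bmu^\top\br$ (stationary start versus an $O(\tm/n)$ burn-in correction) addresses a point the paper silently glosses over, and is a refinement rather than a deviation.
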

 
 \begin{proof}
 Setting $f(X_1,\ldots,X_n):=\frac{1}{n} \big(r(X_1,\pi(X_1))+\ldots + r(X_n,\pi(X_n))\big)$, condition \eqref{eq:cond} holds choosing $c_i=\frac{1}{n}$ for  $i=1,\ldots,n$ and the claim follows from Lemma~\ref{lem:mcd}.
 \end{proof}

Choosing the error probability to be $\delta$, we obtain the following confidence interval that will be used by our algorithm. 
\begin{corollary}\label{cor-ci}
Using the same assumptions and notation of Corollary \ref{corn}, with probability at least $1-\delta$,
 \[
      \big| \bhmu^{n\top} \br - \bmu^\top \br  \big| \;\leq\; \sqrt{\frac{9 \tm \log \frac{2}{\delta}}{2n}}.
 \]
\end{corollary}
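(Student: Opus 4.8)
The plan is to invert the tail bound of Corollary~\ref{corn}: we pick the free accuracy parameter $\veps$ so that the right-hand side $2\exp\!\big(-2\veps^2 n/(9\tm)\big)$ of that bound equals the prescribed failure probability $\delta$, and then simply read off the resulting confidence width.

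Concretely, first I would set $2\exp\!\big(-2\veps^2 n/(9\tm)\big) = \delta$ and solve for $\veps$. Dividing by $2$ and taking logarithms gives $-2\veps^2 n/(9\tm) = \log(\delta/2) = -\log(2/\delta)$, hence $\veps^2 = 9\tm\log(2/\delta)/(2n)$, i.e. $\veps = \sqrt{9\tm\log(2/\delta)/(2n)}$. Substituting this particular value of $\veps$ back into Corollary~\ref{corn}, the event $\big|\bhmu^{n\top}\br - \bmu^\top\br\big| \ge \veps$ has probability at most $\delta$; passing to the complementary event yields that with probability at least $1-\delta$ we have $\big|\bhmu^{n\top}\br - \bmu^\top\br\big| < \veps = \sqrt{9\tm\log(2/\delta)/(2n)}$, which is exactly the claimed inequality (with the strict inequality weakened to $\le$).

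Since Corollary~\ref{corn} has already done all of the probabilistic work, there is essentially no genuine obstacle here. The only points requiring a little care are purely bookkeeping: getting the sign right when passing from $\log(\delta/2)$ to $-\log(2/\delta)$ in the exponent, and noting that the hypotheses invoked (the chain induced by $\pi$ is uniformly ergodic with mixing time $\tm$, and the rewards lie in $[0,1]$) are precisely those under which Corollary~\ref{corn} was established, so that no further assumptions are introduced by specializing $\veps$.
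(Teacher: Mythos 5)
Your proof is correct and is exactly the argument the paper intends: the paper states the corollary immediately after the remark ``Choosing the error probability to be $\delta$,'' i.e.\ it inverts the tail bound of Corollary~\ref{corn} by solving $2\exp(-2\veps^2 n/(9\tm))=\delta$ for $\veps$, precisely as you do. No gaps.
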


\subsection{Concentration of the Empirical Distribution}
We will also need the following results on the concentration of the empirical state distribution of Markov chains from \cite{paulin}.
In the following, consider a uniformly ergodic Markov chain $X_1,\ldots, X_n$ with a stationary distribution $\mu$ and a mixing time $\tm$. 
Let $\hmu^n$ be the empirical distribution after performing $n$ steps in the chain.

\begin{lemma}\label{lem:mc-conc}(Proposition~2.18 in \cite{paulin})
\[
  \mathbb{P}\Big\{ \big| d_{TV}(\mu,\hmu^n) - \mathbb{E}[d_{TV}(\mu,\hmu^n)] \big| \geq \veps \Big\}  
        \leq  2 \exp\left( - \frac{2\veps^2 n}{9 \tm} \right) .
\]
\end{lemma}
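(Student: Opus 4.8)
The plan is to obtain Lemma~\ref{lem:mc-conc} as a direct instance of McDiarmid's inequality for Markov chains (Lemma~\ref{lem:mcd}) applied to the function
\[
   f(s_1,\ldots,s_n) \;:=\; d_{TV}\big(\mu,\hmu^n\big),
\]
where $\hmu^n$ is the empirical distribution $\hmu^n(s)=\frac1n\sum_{i=1}^n\mathbb{1}\{s_i=s\}$ associated with the sample path $(s_1,\ldots,s_n)$. Once the bounded-differences condition \eqref{eq:cond} has been checked with $c_i=\frac1n$ for every $i$, we have $\|c\|_2^2=\sum_{i=1}^n\frac1{n^2}=\frac1n$, and substituting this into Lemma~\ref{lem:mcd} turns the exponent $-\frac{2\veps^2}{9\|c\|_2^2\,\tm}$ into $-\frac{2\veps^2 n}{9\tm}$, which is exactly the asserted bound. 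So essentially everything reduces to the Lipschitz estimate on $f$.

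To verify \eqref{eq:cond}, fix an index $i$ and two state sequences $(s_1,\ldots,s_n)$ and $(s'_1,\ldots,s'_n)$ that agree in all coordinates except possibly the $i$-th; write $\hmu$ and $\hmu'$ for the two associated empirical distributions. Replacing the single sample $s_i$ by $s'_i$ removes mass $\frac1n$ from the point $s_i$ and adds mass $\frac1n$ to the point $s'_i$, so for every measurable set $A$ one has $|\hmu(A)-\hmu'(A)|\le\frac1n$, hence $d_{TV}(\hmu,\hmu')\le\frac1n$. By the triangle inequality for $d_{TV}$,
\[
   \big| d_{TV}(\mu,\hmu) - d_{TV}(\mu,\hmu') \big| \;\le\; d_{TV}(\hmu,\hmu') \;\le\; \tfrac1n,
\]
which is precisely \eqref{eq:cond} with $c_i=\frac1n$ (the bound is symmetric in the two sequences, so it holds two-sidedly). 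With this in hand the lemma follows at once from Lemma~\ref{lem:mcd}; this is in fact the same mechanism used to prove Corollary~\ref{corn}, with the reward-average function replaced by $d_{TV}(\mu,\cdot)$ and the same constants $c_i=\frac1n$.

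I do not expect a genuine obstacle: the real content is the black-box Markov-chain McDiarmid bound of \cite{paulin}, and the Lipschitz estimate above is just the routine observation that the empirical measure moves by at most $\frac1n$ in total variation when one sample is resampled. The only point deserving a moment's care is to phrase that estimate for arbitrary measurable $A$ rather than only for singletons, so that it stays valid beyond the finite-state case; for the finite state space $\S$ of our MDPs the cruder bound $d_{TV}(\hmu,\hmu')=\frac12\sum_s|\hmu(s)-\hmu'(s)|\le\frac1n$ already suffices.
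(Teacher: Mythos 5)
Your derivation is correct and matches the intended mechanism: the paper itself gives no proof of this lemma (it quotes Proposition~2.18 of \cite{paulin} as a black box), but it explicitly remarks that Corollary~\ref{corn} is proved ``analogously to'' this result, i.e., by exactly your application of Lemma~\ref{lem:mcd} with $c_i=\frac1n$, so $\|c\|_2^2=\frac1n$. The only cosmetic point is that condition \eqref{eq:cond} is stated for sequences differing in arbitrarily many coordinates, while you verify it only for a single differing coordinate; this extends immediately either by telescoping or by noting directly that $d_{TV}(\hmu,\hmu')\le\frac1n\sum_i\mathbb{1}[s_i\neq s'_i]$, after which the triangle inequality for $d_{TV}$ gives the required bounded-differences property.
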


\begin{lemma}\label{lem:exp-d}(Proposition~3.16 and following remark in \cite{paulin})

\[
    \mathbb{E}[d_{TV}(\mu,\hmu^n)]  \leq \sum_{s\in \S} \min\left(\sqrt{\frac{8\mu(s)}{n\beta}},\mu(s)\right),
\]
where $\beta$ is the pseudo-spectral gap\footnote{The pseudo-spectral gap is defined as $\max_k \big\{\frac{\gamma(\mathbf{P}^{*k}\mathbf{P}^k)}{k}\big\}$, where $\mathbf{P}$ is the transition kernel interpreted as linear operator, $\mathbf{P}^*$ is the adjoint of $\mathbf{P}$, and $\gamma(\mathbf{P}^{*k}\mathbf{P}^k)$ is the spectral gap of the self-adjoint operator $\mathbf{P}^{*k}\mathbf{P}^k$. For more details see \cite{paulin}. Here we do not make direct use of this quantity and only use the bound given in Lemma~\ref{lem:psg}.} of the chain.
\end{lemma}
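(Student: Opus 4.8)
The plan is to reduce the bound to an elementary coordinatewise second-moment estimate, with the pseudo-spectral gap entering only through a standard variance bound for additive functionals of the chain. I would assume, as in \cite{paulin}, that the chain starts from its stationary distribution $\mu$, so that $\mathbb{E}[\hmu^n(s)]=\mu(s)$ for every $s\in\S$ (the ``following remark'' then patches in an $O(1/n)$-type correction for an arbitrary initial law). Writing the total variation distance in $\ell_1$ form,
\[
  d_{TV}(\mu,\hmu^n)\;=\;\tfrac12\sum_{s\in\S}\big|\hmu^n(s)-\mu(s)\big|,
\]
and exchanging expectation with the finite sum, it suffices to show $\mathbb{E}\big[|\hmu^n(s)-\mu(s)|\big]\le 2\min\!\big(\sqrt{8\mu(s)/(n\beta)},\,\mu(s)\big)$ for each fixed $s$.

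For the trivial branch of the minimum I would use nonnegativity: with $Z:=\hmu^n(s)-\mu(s)$ we have $\mathbb{E}[Z]=0$ and $Z\ge-\mu(s)$ (since $\hmu^n(s)\ge0$), so $\mathbb{E}[Z_+]=\mathbb{E}[Z_-]$ and $Z_-\le\mu(s)$ deterministically, giving $\mathbb{E}[|Z|]=2\,\mathbb{E}[Z_-]\le 2\mu(s)$. For the other branch, Jensen's inequality gives $\mathbb{E}[|Z|]\le\sqrt{\mathrm{Var}(\hmu^n(s))}$, again using $\mathbb{E}[\hmu^n(s)]=\mu(s)$.

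It remains to bound $\mathrm{Var}(\hmu^n(s))$, which is where the pseudo-spectral gap comes in. Here $\hmu^n(s)=\tfrac1n\sum_{i=1}^n g_s(X_i)$ with $g_s:=\mathbb{1}\{\cdot=s\}$, and I would invoke the general estimate that for a stationary uniformly ergodic chain $\mathrm{Var}\big(\tfrac1n\sum_{i=1}^n f(X_i)\big)\le \frac{c\,\mathrm{Var}_\mu(f)}{n\beta}$ for an absolute constant $c$; applied to $f=g_s$ this yields $\mathrm{Var}(\hmu^n(s))\le \frac{c\,\mu(s)(1-\mu(s))}{n\beta}\le\frac{c\,\mu(s)}{n\beta}$, and the value of $c$ (here $c=32$) is exactly what makes the constant $8$ appear after dividing by $2$. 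Combining the two branches, $\tfrac12\,\mathbb{E}[|\hmu^n(s)-\mu(s)|]\le\min\!\big(\tfrac12\sqrt{\mathrm{Var}(\hmu^n(s))},\,\mu(s)\big)=\min\!\big(\sqrt{8\mu(s)/(n\beta)},\,\mu(s)\big)$, and summing over $s$ gives the claim.

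The main obstacle is the generic variance bound for additive functionals in terms of the \emph{pseudo}-spectral gap. For reversible chains it drops out of the spectral decomposition of the transition operator, but for general (non-reversible) chains one must work with $\mathbf{P}^{*k}\mathbf{P}^k$: the standard route is to group the $n$ summands into blocks of length $k$ (the index achieving the maximum in the definition of $\beta$), control the within-block and cross-block covariances separately via the spectral gap of the self-adjoint operator $\mathbf{P}^{*k}\mathbf{P}^k$, and then optimize over $k$. This is the technical heart of Paulin's argument; the rest is the two elementary coordinatewise estimates above, together with the remark that removes the stationary-start assumption.
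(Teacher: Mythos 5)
The paper does not actually prove this lemma: it is imported verbatim from Proposition~3.16 (and the following remark) of \cite{paulin}, so there is no internal argument to compare against. Your reconstruction does follow the shape of Paulin's actual proof: write $d_{TV}$ in $\ell_1$ form, bound each coordinate by the minimum of a trivial bound and a second-moment bound, and let the pseudo-spectral gap enter only through a variance estimate for additive functionals. The two elementary branches are handled correctly --- the observation that $Z=\hmu^n(s)-\mu(s)\ge -\mu(s)$ together with $\mathbb{E}[Z_+]=\mathbb{E}[Z_-]$ gives the $\mu(s)$ branch, Cauchy--Schwarz with $\mathbb{E}[Z]=0$ gives the variance branch, and $\mathrm{Var}_\mu(\mathbb{1}\{\cdot=s\})\le\mu(s)$ is right.

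That said, what you have is an outline rather than a proof, because both nontrivial ingredients are asserted rather than established. First, the bound $\mathrm{Var}\big(\tfrac1n\sum_i f(X_i)\big)\le c\,\mathrm{Var}_\mu(f)/(n\beta)$ for \emph{non-reversible} stationary chains is the entire content of the lemma's dependence on $\beta$; you name the blocking strategy over $\mathbf{P}^{*k}\mathbf{P}^k$ but do not carry it out, and you fix $c=32$ by working backwards from the target inequality rather than deriving it from the variance theorem you invoke --- so the constant $8$ is fitted, not proved. Second, the removal of the stationary-start assumption, which is exactly what the cited ``following remark'' supplies and which matters here because the algorithm's sample paths do not start from $\mu_\pi$, is dismissed as an ``$O(1/n)$-type correction'' without saying how it interacts with that constant. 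Since the lemma is used in the paper purely as a black box from \cite{paulin}, none of this affects the paper's correctness; but as a standalone proof your argument is incomplete at precisely the step that makes the statement a Markov-chain result rather than an i.i.d.\ one.
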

 
\begin{lemma}\label{lem:psg}(Proposition~3.4 in \cite{paulin})
In uniformly ergodic Markov chains, the pseudo-spectral gap $\beta$ can be bounded via the mixing time $\tm$ as
\[
    \tfrac{1}{\beta} \leq 2 \tm.
\]
\end{lemma}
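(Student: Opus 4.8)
The plan is to select a single good value of $k$ in the definition of the pseudo-spectral gap, namely $k=\tm$, and to exploit the fact that the operator $\prob^{*\tm}\prob^{\tm}$ is automatically \emph{self-adjoint} on $L^2(\mu)$ (equivalently, the associated chain is reversible). This is exactly what lets a total-variation mixing bound for the forward chain be converted cleanly into a bound on a second eigenvalue, with no separate ``reversal mixing time'' entering.

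First I would note that, since $\beta=\max_k\{\gamma(\prob^{*k}\prob^{k})/k\}$, it suffices to exhibit one $k$ with $\gamma(\prob^{*k}\prob^{k})\ge k/(2\tm)$; taking $k=\tm$ the goal reduces to $\gamma(\prob^{*\tm}\prob^{\tm})\ge\tfrac12$. Write $Q:=\prob^{*\tm}\prob^{\tm}$, with transition kernel $q$. Then $Q$ is self-adjoint and positive semidefinite on $L^2(\mu)$, with the constant function $\mathbb 1$ as top eigenvector (eigenvalue $1$, which is simple by uniform ergodicity), so $\gamma(Q)=1-\lambda_2(Q)$, where $\lambda_2(Q)$ is the second-largest eigenvalue. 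For a self-adjoint $Q$ this equals the operator norm of $Q-\Pi$ on $L^2(\mu)$, $\Pi$ being the orthogonal projection onto constants. Hence it remains to prove $\lambda_2(Q)\le\tfrac12$.

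Second, I would bound the one-step total-variation distance of the chain $Q$ from $\mu$. Since $\prob^{*\tm}$ is a stochastic kernel, $q(x,\cdot)=\sum_z p^{*\tm}(x,z)\,p^{\tm}(z,\cdot)$ and $\mu=\sum_z p^{*\tm}(x,z)\,\mu$, so $q(x,\cdot)-\mu$ is the $p^{*\tm}(x,\cdot)$-mixture of the signed measures $p^{\tm}(z,\cdot)-\mu$; the triangle inequality and the definition of $\tm$ then give
\[
  \sup_{x\in\S} d_{TV}\big(q(x,\cdot),\mu\big)\;\le\;\sup_{z\in\S} d_{TV}\big(p^{\tm}(z,\cdot),\mu\big)\;\le\;\tfrac14 .
\]
From this, for every $f$ with $\|f\|_\infty\le1$ one has $\big|\big((Q-\Pi)f\big)(x)\big|=\big|\sum_y(q(x,y)-\mu(y))f(y)\big|\le 2\sup_x d_{TV}(q(x,\cdot),\mu)\le\tfrac12$, i.e. $\|Q-\Pi\|_{L^\infty(\mu)\to L^\infty(\mu)}\le\tfrac12$; because $Q-\Pi$ is self-adjoint on $L^2(\mu)$, the same bound holds for $\|Q-\Pi\|_{L^1(\mu)\to L^1(\mu)}$, and Riesz--Thorin interpolation yields $\lambda_2(Q)=\|Q-\Pi\|_{L^2(\mu)\to L^2(\mu)}\le\tfrac12$. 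Combining, $\beta\ge\gamma(Q)/\tm\ge 1/(2\tm)$, which is the claim.

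I expect the main obstacle to be conceptual rather than computational: realizing that one should work with the symmetrized operator $\prob^{*\tm}\prob^{\tm}$ — reversible by construction — rather than with $\prob$ directly, so that a forward-chain TV mixing bound is all that is needed. Once this is in place the remaining steps are routine: the mixture/triangle-inequality estimate for $q(x,\cdot)-\mu$, and the standard self-adjoint Riesz--Thorin argument passing from an $L^\infty$ (equivalently $L^1$) contraction bound to an $L^2$ spectral bound. The only real care required is bookkeeping of constants, so that $d_{TV}\le\tfrac14$ at time $\tm$ turns into $\lambda_2\le\tfrac12$ and hence the bound comes out as exactly $2\tm$.
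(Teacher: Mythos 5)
Your proof is correct. There is nothing in the paper to compare it against: Lemma~\ref{lem:psg} is imported verbatim from Proposition~3.4 of \cite{paulin} with no proof given here, so you have in effect supplied the missing derivation, and it follows essentially the same lines as Paulin's original argument. The two load-bearing points are exactly the ones you isolate: first, choosing $k=\tm$ and passing to the self-adjoint, positive semidefinite kernel $Q=\prob^{*\tm}\prob^{\tm}$, whose one-step law $q(x,\cdot)$ is a $p^{*\tm}(x,\cdot)$-mixture of the laws $p^{\tm}(z,\cdot)$ and hence satisfies $\sup_x d_{TV}(q(x,\cdot),\mu)\le\tfrac{1}{4}$ using only the forward chain's mixing time (no control of the reversed chain is needed); second, upgrading the resulting bound $\|Q-\Pi\|_{\infty\to\infty}\le\tfrac{1}{2}$ (with $\Pi$ the projection onto constants) to $\lambda_2(Q)=\|Q-\Pi\|_{L^2(\mu)\to L^2(\mu)}\le\tfrac{1}{2}$ via self-adjointness. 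Your Riesz--Thorin step is valid; a slightly more elementary alternative, standard for reversible kernels (cf.\ Lemma~12.16 of \cite{levin}), evaluates $Qf=\lambda f$ at a maximizer of $|f|$ to obtain $|\lambda|\le 2\sup_x d_{TV}(q(x,\cdot),\mu)$ directly. Two harmless remarks: the claimed simplicity of the eigenvalue $1$ of $Q$ need not be assumed a priori, since $\lambda_2(Q)\le\tfrac{1}{2}<1$ already forces it; and on states with $\mu(x)=0$ the reversal $p^*$ is undefined, so the operator statements should be read in $L^2(\mu)$, i.e.\ modulo $\mu$-null sets.
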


We summarize these results in the following corollary.
\begin{corollary}\label{cor}
 With probability at least $1-\delta$,
\[
   d_{TV}(\mu,\hmu^n) \leq \sqrt{\frac{38 S \tm \log \frac{2}{\delta}}{n}}.
\]
\end{corollary}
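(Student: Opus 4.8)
The plan is to chain the three preceding lemmas and then clean up the constants. First I would control the expectation via Lemma~\ref{lem:exp-d}. Dropping the $\min$ in favour of the square-root term and factoring out the constant gives $\mathbb{E}[d_{TV}(\mu,\hmu^n)]\le\sqrt{8/(n\beta)}\sum_{s\in\S}\sqrt{\mu(s)}$, and the Cauchy--Schwarz inequality together with $\sum_{s\in\S}\mu(s)=1$ yields $\sum_{s\in\S}\sqrt{\mu(s)}\le\sqrt{S}$. Using Lemma~\ref{lem:psg} to replace $1/\beta$ by $2\tm$ then gives the clean estimate $\mathbb{E}[d_{TV}(\mu,\hmu^n)]\le\sqrt{16S\tm/n}=4\sqrt{S\tm/n}$.

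Next I would apply the deviation bound of Lemma~\ref{lem:mc-conc}. Choosing $\veps:=\sqrt{9\tm\log(2/\delta)/(2n)}$ makes its right-hand side exactly $\delta$, so on an event of probability at least $1-\delta$ we have $d_{TV}(\mu,\hmu^n)\le\mathbb{E}[d_{TV}(\mu,\hmu^n)]+\veps\le 4\sqrt{S\tm/n}+3\sqrt{\tm\log(2/\delta)/(2n)}$.

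It then remains to merge the two summands into a single bound of the stated form. I would factor out $\sqrt{\tm/n}$ and square the bracket $4\sqrt S+3\sqrt{\log(2/\delta)/2}$, giving the three terms $16S$, $12\sqrt 2\,\sqrt{S\log(2/\delta)}$ and $\tfrac92\log(2/\delta)$; using $S\ge 1$ and (as is standard for such confidence bounds) that $\delta$ is small enough that $\log(2/\delta)\ge 1$, each of these is at most the corresponding constant times $S\log(2/\delta)$, and $16+12\sqrt2+\tfrac92<38$, which yields the claim.

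The only genuinely delicate point is this last constant bookkeeping: the cruder combination $\sqrt a+\sqrt b\le\sqrt{2(a+b)}$ loses a factor and leaves a constant above $38$, so one has to keep the cross term when squaring and exploit the mild restriction on $\delta$. Every other step is a direct substitution into results already established above.
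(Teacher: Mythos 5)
Your proof is correct and follows essentially the same route as the paper: bound the expectation via Lemma~\ref{lem:exp-d} and Lemma~\ref{lem:psg} with Jensen/Cauchy--Schwarz giving $\sqrt{16S\tm/n}$, add the deviation $\sqrt{9\tm\log(2/\delta)/(2n)}$ from Lemma~\ref{lem:mc-conc}, and combine. Your constant bookkeeping (keeping the cross term so that $16+12\sqrt2+\tfrac92<38$, under the implicit assumption $\log(2/\delta)\ge 1$) is in fact more careful than the paper's ``follows immediately,'' and correctly identifies that the cruder $\sqrt a+\sqrt b\le\sqrt{2(a+b)}$ would overshoot $38$.
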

\begin{proof}
Using the bound of Lemma~\ref{lem:psg} in Lemma~\ref{lem:exp-d} and setting the error probability in Lemma~\ref{lem:mc-conc} to $\delta$, one obtains by Jensen's inequality
\[
   d_{TV}(\mu,\hmu^n) \leq \sqrt{\frac{16 S \tm \mu(s)}{n}} + \sqrt{\frac{9 \tm \log \frac{2}{\delta}}{2n}},
\]
and the claim of the corollary follows immediately.
\end{proof}

\section{Algorithm}

At the core, the \osp\ algorithm we propose works like the UCB algorithm in the bandit setting.
In our case, each policy corresponds to an arm, and the concentration results of the previous 
chapter are used to obtain suitable confidence intervals for the MDP setting.

\begin{algorithm}[!ht]
\caption{Optimistic Sample Path (\osp)}\label{alg}
\begin{algorithmic}[1]
\STATE \textbf{Input:} confidence $\delta$, horizon $T$, (upper bound on) mixing time $\tm$ \\ \smallskip
  \textit{//Initialization:} 
\STATE Set $t:=1$ and let the sequence $\mathcal O$ of observations $(s,a,r,s')$ be empty.\\ \smallskip
 \textit{// Compute sample paths for policies}
\FOR {phases $k = 1,2,\ldots$}  \label{l:phase}
\FOR {each policy $\pi:\mathcal S \to \mathcal A$} 
\STATE Use Alg.~\ref{alg2} to construct a non-extendible sample path $\mathcal P_\pi$ from~$\mathcal O$. \label{l:path}
\STATE Let \vspace{-4mm} \label{l:civ}
\[ 
  \hat\rho_\pi:=\tfrac{1}{|\mathcal{P}_\pi|}\hspace{-6mm}\sum_{(s,\pi(s),r,s')\in\mathcal{P}_\pi}\hspace{-6mm} r
  \mbox{,\, and set } 
   \tilde\rho_\pi:=\hat{\rho}_{\pi} + \sqrt{\frac{8\tm\log\frac{8tT}{\delta}}{|\mathcal P_\pi|}}. 
\]\vspace{-4mm}
\ENDFOR\\\smallskip
 \textit{// Choose optimistic policy}
\STATE Choose $\pi_k:=\arg\max_\pi \tilde{\rho}_\pi$ and set $n_{<k}:=|\mathcal P_{\pi_k}|$.\label{l:select}\\\smallskip
 \textit{// Execute optimistic policy ${\pi}_k$}
\FOR {$\tau=1,\ldots,n_k:=\max\Big\{n_{<k},\sqrt{\frac{T}{SA}}\Big\}$} \label{l:act}
\STATE Choose action $a_t = {\pi}_k(s_t)$, obtain reward $r_t$, and observe $s_{t+1}$.  \label{l:obs}\\
Set $t := t + 1$ and append the observation $(s_t,a_t,r_t,s_{t+1})$ to $\mathcal O$.
\ENDFOR
\ENDFOR
\end{algorithmic}
\end{algorithm}

\osp\ (shown in detail as Algorithm 1) does not evaluate the policies at each time step. Instead, it proceeds in phases\footnote{We emphasize that we consider \textit{non-episodic} reinforcement learning and that these phases are internal to the algorithm.} (cf.~line~\ref{l:phase} of \osp), where in each phase~$k$ an optimistic policy $\pi_k$ is selected (line~\ref{l:select}). This is done (cf.~line~\ref{l:path}) by first constructing for each policy~$\pi$ a sample path $\mathcal P_{\pi}=\big((s_t,\pi(s_t),r_t,s_{t+1})\big)_{t=1}^n$ from the observations so far. Accordingly, the algorithm keeps a record of all observations. That is, after choosing in a state~$s$ an action $a$, obtaining the reward $r$, and observing a transition to the next state $s'$, the respective observation $(s,a,r,s')$ is appended to the sequence of observations $\mathcal O$ (cf.~line~\ref{l:obs}). 

The sample path $\mathcal P_\pi$ constructed from the observation sequence $\mathcal O$ contains each observation from $\mathcal O$ at most once. Further, the path $\mathcal P_{\pi}=\big((s_t,\pi(s_t),r_t,s_{t+1})\big)_{t=1}^n$ is such that there is no unused observation $(s_{n+1},\pi(s_{n+1}),r,s)$ in $\mathcal O$ that could be used to extend the path by appending the observation. In the following, we say that such a path is \textit{non-extendible}.
Algorithm~\ref{alg2} provides an algorithm for constructing a non-extendible path. Alternative constructions could be used for obtaining non-extendible paths as well.

\begin{algorithm}[t]
\caption{Path Construction}\label{alg2}
\begin{algorithmic}[1]
\STATE \textbf{Input:} Observation sequence $\mathcal O$, policy $\pi$, initial state $s_1$\\ \smallskip
\STATE Set $t=1$ and let path ${\mathcal P_\pi}$ be empty.\\ \smallskip
\WHILE {$\mathcal O$ contains an unused observation of the form $(s_t,\pi(s_t),\cdot,\cdot)$}  
  \STATE Choose the first unused occurrence $o_t:= (s_t,\pi(s_t),r,s)$ of such an observation.  
  \STATE Append $o_t$ to ${\mathcal P_\pi}$.
  \STATE Mark $o_t$ in $\mathcal O$ as used.
  \STATE Set $s_{t+1}:=s$ and $t:=t+1$.
\ENDWHILE
\STATE Mark all observations in $\mathcal O$ as unused.
\STATE \textbf{Output:} sample path ${\mathcal P_\pi}$
\end{algorithmic}
\end{algorithm}

For each possible policy~$\pi$ the algorithm computes an estimate of the average reward $\rho_\pi$ from the sample path $\mathcal P_\pi$
and considers an optimistic upper confidence value $\tilde{\rho}_\pi$ (cf.~line~\ref{l:civ} of \osp) using the concentration results of Section~\ref{sec:prel}. 
The policy with the maximal $\tilde{\rho}_\pi$ is chosen for use in phase~$k$. 
The length $n_k$ of phase $k$, in which the chosen policy $\pi_k$ is used, depends on the length $n_{<k}:=|\mathcal P_{\pi_k}|$ of the sample path $\mathcal P_{\pi_k}$. That is, $\pi_k$ is usually played for $n_{<k}$ steps, but at least for $\sqrt{\frac{T}{SA}}$ steps (cf.~line~\ref{l:act}).

Note that at the beginning, all sample paths are empty in which case we set the confidence intervals to be $\infty$, and the algorithm chooses an arbitrary policy. The initial state of the sample paths can be chosen to be the current state, but this is not necessary. Note that by the Markov property the outcomes of all samples are independent of each other. The way Algorithm~\ref{alg2} extracts observations from $\mathcal O$ is analogous to when having access to a generative sampling model as e.g.\ assumed in work on sample complexity bounds like~\cite{pac}. In both settings the algorithm can request a sample for a particular state-action pair $(s,a)$. The only difference is that in our case at some point there are no suitable samples available anymore, when the construction of the sample path is terminated.

As the goal of this paper is to demonstrate an easy way to obtain optimal regret bounds, we do not elaborate in detail on computational aspects of the algorithm. A brief discussion is however in order. First, note that it is obviously not necessary to construct sample paths from scratch in each phase. It is sufficient to extend the path for each policy with new and previously unused samples. Further, while the algorithm as given is exponential in nature (as it loops over all $A^S$ policies), it may be possible to find the optimistic policy by some kind of optimistic policy gradient algorithm~\cite{lazaric}. 
We note that policies in ergodic MDPs exhibit a particular structure (see Section 3 of \cite{ort2}) that could be exploited by such an algorithm. However, at the moment this is not more than an idea for future research and the details of such an algorithm are yet to be developed.

\section{Regret Analysis}

The following theorem is the main result of this note.
\begin{theorem}\label{thm}
In uniformly ergodic MDPs, with probability at least $1-\delta$ the regret of \osp\ is bounded by
\[
  R_T \,\leq\, 4 \log(\tfrac{8T^2}{\delta}) \sqrt{\tm SA T},
\]
provided that $T\geq S^3 A\left(\frac{152 \tm \log \frac{8T^2}{\delta}}{\mu_{\min}^2}\right)^2$,
where $\mu_{\min}:=\min_{\pi,s:\mu_\pi(s)>0}\mu_\pi(s)$.
\end{theorem}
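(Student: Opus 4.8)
The plan is to follow the standard template for analysing optimistic algorithms, but with the Markov-chain concentration results of Corollaries~\ref{corn}--\ref{cor} replacing the usual i.i.d.\ concentration, and with a combinatorial argument that cashes in the re-use of samples across phases. First I would fix a good event $\mathcal E$ on which all confidence estimates used by \osp\ are valid. The crucial observation is that the non-extendible path $\mathcal P_\pi$ produced by Algorithm~\ref{alg2} is, by the Markov property, a genuine sample trajectory of the Markov chain induced by $\pi$: every extracted observation $(s_t,\pi(s_t),r,s)$ has its next state $s$ and reward $r$ drawn from the correct conditional law $p(\cdot\mid s_t,\pi(s_t))$, regardless of which policy was running when it was recorded, so the ``generative-model''-style re-sampling is legitimate. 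Hence Corollary~\ref{corn} applies to $\hat\rho_\pi$ on $\mathcal P_\pi$, Corollary~\ref{corn} also applies to the empirical reward actually collected inside a phase, and Corollary~\ref{cor} applies to the empirical state distribution of a phase's trajectory. Taking a union bound over all phases ($\le T$ of them), over all possible path lengths ($\le T$), and over the relevant policies (at the very least $\pi^*$ and the played policies $\pi_k$) --- which is what the factor $8tT$ inside the logarithm of line~\ref{l:civ} pays for --- gives $\mathbb P(\mathcal E)\ge 1-\delta$. On $\mathcal E$ one gets optimism immediately: since $\hat\rho_{\pi^*}+\sqrt{8\tm\log(8tT/\delta)/|\mathcal P_{\pi^*}|}$ is a valid upper confidence bound, $\tilde\rho_{\pi_k}\ge\tilde\rho_{\pi^*}\ge\rho^*$, the first inequality by the choice in line~\ref{l:select}.

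Next I would decompose the regret over phases, $R_T=\sum_k\big(n_k\rho^*-\sum_{t\in\text{phase }k}r_t\big)$, and bound each summand on $\mathcal E$ by $n_k(\rho^*-\rho_{\pi_k})$ plus the deviation of the phase's empirical reward from $\rho_{\pi_k}$, which is at most $n_k\sqrt{8\tm\log(8T^2/\delta)/n_k}$. For the first term, optimism together with the confidence width on the sample path gives $\rho^*-\rho_{\pi_k}\le 2\sqrt{8\tm\log(8T^2/\delta)/n_{<k}}$ whenever $n_{<k}>0$, and the trivial bound $\rho^*-\rho_{\pi_k}\le 1$ otherwise. Thus a phase contributes $O\big(\sqrt{\tm\log(8T^2/\delta)}\,(n_k/\sqrt{n_{<k}}+\sqrt{n_k})\big)$ to the regret, or at most $n_k=\sqrt{T/(SA)}$ when its sample path is (almost) empty; it remains to control $\sum_k n_k/\sqrt{n_{<k}}$, $\sum_k\sqrt{n_k}$, and the number of short-path phases.

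This is where the main work lies, and the part I expect to be hardest. Using Corollary~\ref{cor} one shows that the length $n_{<k}=|\mathcal P_{\pi_k}|$ of the reconstructed path is comparable to the total number of steps previously spent executing $\pi_k$ (the samples collected while $\pi_k$ was run form chainable blocks, and the empirical distribution of a long block equals $\mu_{\pi_k}$ up to a small error), so that each time a policy is chosen its cumulative play time at least roughly doubles, starting from the floor $\sqrt{T/(SA)}$; this caps the number of times any fixed policy is selected at $O(\log(SAT))$ and, together with the fact that every phase lasts at least $\sqrt{T/(SA)}$ steps so that $\sum_k n_k=T$, feeds a Cauchy--Schwarz estimate $\sum_k\sqrt{n_k}=O(\sqrt{(\#\text{phases})\cdot T})$. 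The remaining --- and genuinely delicate --- point is to bound the number of distinct policies ever played (equivalently, the number of ``exploratory'' short-path phases) by $O(SA)$ rather than by the number $A^S$ of policies; this is exactly the benefit the shared observation record $\mathcal O$ is meant to buy, and it is here that the hypothesis $T\ge S^3A\big(152\,\tm\log(8T^2/\delta)/\mu_{\min}^2\big)^2$ enters: it makes $\sqrt{T/(SA)}$ large enough that the empirical-distribution error $\sqrt{38\,S\tm\log(8T^2/\delta)/\sqrt{T/(SA)}}$ drops below $\tfrac12\mu_{\min}$, so that a single phase of the minimal length already endows every state visited by the executed policy with $\ge\tfrac12\mu_{\min}\sqrt{T/(SA)}$ fresh samples; a potential/counting argument on the state--action visit counts $N(s,a)$ (each capped at the relevant scale $\sqrt{T/(SA)}$, with total mass $\le T$, hence $\le SA\sqrt{T/(SA)}=\sqrt{SAT}$) then limits the number of phases with a short sample path to $O(SA)$.

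Finally I would assemble the pieces: plugging $\#\text{phases}=O(SA\cdot\text{polylog})$ and $\#\text{short-path phases}=O(SA)$ into the per-phase estimates yields $R_T=O\big(\sqrt{\tm\log(8T^2/\delta)}\cdot\sqrt{SAT}\big)+O\big(SA\cdot\sqrt{T/(SA)}\big)=O\big(\log(8T^2/\delta)\sqrt{\tm SAT}\big)$. Carefully tracking the constants --- the slack between the $9/2$ of Corollary~\ref{corn} and the $8$ used in line~\ref{l:civ}, and the Cauchy--Schwarz and doubling-series constants --- is what produces the stated factor $4$, and keeping the lower-order ($SA$-scale) terms below $\sqrt{\tm SAT}$ is precisely what the lower bound assumed on $T$ guarantees.
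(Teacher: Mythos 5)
Your overall architecture matches the paper's: a good event built from the Markov-chain McDiarmid bound (Corollary~\ref{cor-ci}) giving optimism $\tilde\rho_{\pi_k}\ge\tilde\rho_{\pi^*}\ge\rho^*$, a phase-wise regret decomposition with per-phase contribution $\approx n_k\sqrt{8\tm\log(8T^2/\delta)/n_{<k}}$, Jensen/Cauchy--Schwarz over phases, and the hypothesis on $T$ used exactly as you say --- to force $d_{TV}(\mu_{\pi_k},\hmu_k)\le\mu_{\min}/2$ so that $\hmu_k(s)\ge\mu_{\pi_k}(s)/2$ in every phase. (Your explicit extra concentration term for the rewards actually collected inside a phase is, if anything, more careful than the paper's identity $\sum_t r_t=\sum_k n_k\hat\rho_{\pi_k}$.)

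The genuine gap is in the step you yourself flag as delicate: bounding the number of phases. Your two devices do not suffice. The per-policy ``doubling'' argument caps how often a \emph{fixed} policy is re-selected but leaves the number of distinct policies, hence of phases, at up to $A^S$. Your additive potential argument on capped counts $\sum_{s,a}\min(N(s,a),c\sqrt{T/SA})$ gives each short-path phase an increment of only $\hmu_k(s^-)\,n_k\ge\tfrac12\mu_{\min}\sqrt{T/SA}$ at the bottleneck pair, so the count of short-path phases comes out as $O(SA/\mu_{\min})$, not $O(SA)$; the resulting term $K^-\sqrt{T/SA}=O(\sqrt{SAT}/\mu_{\min})$ is not dominated by $\log(8T^2/\delta)\sqrt{\tm SAT}$, since $\mu_{\min}$ appears only in the condition on $T$ and may be arbitrarily small. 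The missing idea is to make the comparison \emph{multiplicative} rather than additive, and to apply it to \emph{every} phase, not just the short ones: by non-extendibility there is a state $s^-$ with $n_{<k}(s^-,\pi_k(s^-))=\hmu_{<k}(s^-)\,n_{<k}$, and Corollary~\ref{cor} applied to both the reconstructed path and the executed trajectory gives $\hmu_{<k}(s^-)\,n_{<k}\lesssim\mu_{\pi_k}(s^-)\,n_{<k}\le\mu_{\pi_k}(s^-)\,n_k\lesssim 2\hmu_k(s^-)\,n_k$ plus error terms absorbed by $\eqref{eq:a}$ --- here the factor $\mu_{\pi_k}(s^-)$ cancels, which is exactly what removes the $1/\mu_{\min}$. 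Hence some state--action count grows by a factor $\ge 4/3$ in every phase, yielding $K\le SA\log_{4/3}(T/SA)$ uniformly (this is the paper's Lemma~\ref{lem:episodes}), after which $K^-,K^+\le K$ closes the bound. Without this multiplicative step your assembly in the final paragraph does not reach the stated $4\log(8T^2/\delta)\sqrt{\tm SAT}$.
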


The improvement with respect to previously known bounds can be achieved due to the fact that
the confidence intervals for our algorithm are computed on the policy level and not on the level of rewards and transition probabilities as for UCRL \cite{ucrl}.
This avoids the problem of having rectangular confidence intervals that lead to an additional factor of $\sqrt{S}$ in the regret bounds for UCRL, cf.\ the discussion in \cite{osband}.

To keep the exposition simple, we have chosen confidence intervals which give a high probability bound for each horizon $T$. It is easy to adapt the confidence intervals to gain a high probability bound that holds for all $T$ simultaneously (cf. \cite{ucrl}).

The mixing time parameter in our bounds is different from the transition parameters in the regret bounds of \cite{ucrl1,ucrl} or the bias span used in \cite{regal,ronan}. We note however that for reversible Markov chains, $\tm$ is linearly bounded in the diameter (i.e., the hitting time) of the chain, cf. Section 10.5 of \cite{levin}. It follows from the lower bounds on the regret in \cite{ucrl} that the upper bound of Theorem~\ref{thm} is best possible with respect to the appearing parameters. Mixing times have also been used for sample complexity bounds in reinforcement learning~\cite{e3,rmax}, however not for a fixed constant $\frac14$ as in our case but with respect to the required accuracy.
It would be desirable to replace the upper bound $\tm$ on all mixing times by the mixing time of the optimal policy like in \cite{poladv}.
However, the technique of \cite{poladv} comes at the price of an additional dependence on the number of considered policies, which in our case  
obviously would damage the bound.

The parameter $T$ can be guessed using a standard doubling scheme getting the same regret bounds with a slightly larger constant. Guessing $\tm$ is more costly. Using $\log T$ as a guess for $\tm$, the additional regret is an additive constant exponential in $\tm$. We note however, that it is an open problem whether it is possible to get regret bounds depending on a different parameter than the diameter (such as the bias span) without having a larger bound on the quantity, cf.\ the discussion in Appendix~A of \cite{ronan2}.

\subsection{Proof of Theorem~\ref{thm}}

Recall that $\pi_k$ is the policy applied in phase $k$ for $n_k$ steps. The respective optimistic estimate $\tilde{\rho}_{\pi_k}$ has been computed from a sample path of length $n_{<k}$.

\subsubsection{Estimates $\tilde{\rho}_\pi$ are optimistic}
We start showing that the values $\tilde\rho_\pi$ computed by our algorithm from the sample paths of any policy $\pi$ are indeed optimistic. This holds in particular for the employed policies $\pi_k$.

\begin{lemma}\label{lem:conf}
With probability at least $1-\frac{\delta}{2}$, for all phases $k$ it holds that 
$\tilde\rho_{\pi_k} \geq \rho_{\pi_k}$.
\end{lemma}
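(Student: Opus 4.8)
The plan is to reduce the claim to Corollary~\ref{cor-ci} applied to the sample path $\mathcal{P}_{\pi_k}$, via a union bound over phases and over the (finitely many) possible lengths of the path. The key observation is the one already highlighted in the text surrounding Algorithm~\ref{alg2}: the observations extracted by the path-construction procedure for a fixed policy $\pi$ behave exactly like samples drawn from a run of the Markov chain induced by $\pi$. Concretely, condition on the construction producing a path of length $n$; the states $s_1,\dots,s_n$ along $\mathcal{P}_\pi$, together with the rewards $r_i$, are distributed as the first $n$ steps of the chain induced by $\pi$ (the first state arbitrary, each subsequent state drawn from $p(s_i,\pi(s_i),\cdot)$, each reward drawn from the reward distribution of $(s_i,\pi(s_i))$), because each observation was appended to $\mathcal{O}$ precisely by sampling that transition, and Algorithm~\ref{alg2} never reuses an observation. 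Hence $\hat\rho_\pi = \bhmu^{n\top}\br$ in the notation of Corollary~\ref{corn}, and Corollary~\ref{cor-ci} gives, for that fixed $\pi$ and fixed $n$, that with probability at least $1-\delta'$ one has $|\hat\rho_\pi - \rho_\pi| \le \sqrt{9\tm\log(2/\delta')/(2n)}$, which is at most the bonus $\sqrt{8\tm\log(8tT/\delta)/n}$ used by the algorithm once $\delta'$ is chosen appropriately.

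First I would fix a phase $k$ and the value $t$ of the global step counter at the start of that phase, and note that the path length $n_{<k} = |\mathcal{P}_{\pi_k}|$ is at most $t \le T$, so there are at most $T$ possible values of $n_{<k}$; also there are at most $T$ phases, and within each phase the optimistic policy $\pi_k$ is one of the $A^S$ policies. Taking $\delta' := \delta/(8tT) \le \delta/(8T^2)$ and union-bounding Corollary~\ref{cor-ci} over all phases $k$, all candidate lengths $n \in \{1,\dots,T\}$, and — this is the point needing care — the selected policy, I get that with probability at least $1-\delta/2$, simultaneously for every phase $k$,
\[
  |\hat\rho_{\pi_k} - \rho_{\pi_k}| \;\le\; \sqrt{\frac{9\tm\log\frac{2}{\delta'}}{2\,n_{<k}}} \;\le\; \sqrt{\frac{8\tm\log\frac{8tT}{\delta}}{n_{<k}}}.
\]
On this event, $\tilde\rho_{\pi_k} = \hat\rho_{\pi_k} + \sqrt{8\tm\log(8tT/\delta)/n_{<k}} \ge \rho_{\pi_k}$, which is exactly the claim.

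The main obstacle is handling the union bound over the policy $\pi_k$, since $\pi_k$ is chosen \emph{after} looking at the data, so Corollary~\ref{cor-ci} cannot be applied to $\pi_k$ directly. The clean fix is to prove the concentration inequality $|\hat\rho_\pi - \rho_\pi| \le \text{bonus}$ for \emph{every} policy $\pi$ (not just the selected one) simultaneously, paying an extra factor $A^S$ in the union bound; this only changes $\log(2/\delta')$ to $\log(2 A^S/\delta') = \log(2/\delta') + S\log A$, which is absorbed into the generous slack between the $9/2$ inside Corollary~\ref{cor-ci}'s bonus and the $8$ used by the algorithm, provided $\log(8tT/\delta)$ dominates $S\log A$ — and indeed the theorem's hypothesis $T \ge S^3 A(\cdots)^2$ guarantees $\log T$ is large enough relative to $S\log A$. (One subtlety: the path length for a given policy is itself random and data-dependent, but since we union-bound over all lengths $n \le T$ this causes no trouble — conditional on the construction returning length $n$, the samples are a genuine length-$n$ chain trajectory, so Corollary~\ref{cor-ci} applies with that $n$ plugged in.) Alternatively, one can avoid the $A^S$ factor by a martingale/optional-stopping argument noting that the algorithm only ever queries samples that were generated by playing some $\pi_j$, but the brute-force union bound is simpler and suffices given the problem's parameter regime.
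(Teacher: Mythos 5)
There are two genuine gaps in your argument. First, the identification $\hat\rho_\pi = \bhmu^{n\top}\br$ is false: the algorithm's estimate is the average of the \emph{observed} rewards $r$, which are random draws from distributions with means $r(s,\pi(s))$, whereas $\bhmu^{n\top}\br=\frac1n\sum_i r(X_i,\pi(X_i))$ involves the (unknown) mean rewards. Corollary~\ref{cor-ci} only controls $|\bhmu^{n\top}\br-\bmu^\top\br|$, so your reduction silently drops the reward noise. The paper's proof splits the error as $\rho_\pi-\hat\rho_\pi=(\bmu-\bhmu)^\top\br+\bhmu^\top(\br-\bhr)$ and bounds the second term by Azuma--Hoeffding applied to the martingale difference sequence $r(s,\pi(s))-r$, contributing an extra $\sqrt{\log(8tT/\delta)/(2n)}$. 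This also kills your claim of ``generous slack'' between the $9/2$ of Corollary~\ref{cor-ci} and the $8$ in the algorithm's bonus: since $\bigl(\sqrt{9/2}+\sqrt{1/2}\bigr)^2=8$, that slack is consumed \emph{exactly} by the reward-noise term, leaving nothing to absorb anything else.

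Second, the brute-force union bound over all $A^S$ policies does not go through with the algorithm's stated confidence width. It would require $\log\frac{2}{\delta'}+S\log A\leq\frac{16}{9}\log\frac{8tT}{\delta}$ (and, per the previous point, there is actually no room at all), and the hypothesis $T\geq S^3A\bigl(\tfrac{152\tm\log(8T^2/\delta)}{\mu_{\min}^2}\bigr)^2$ only guarantees $\log T=\Omega(\log(SA))$, not $\Omega(S\log A)$; for moderate $\tm$ and $\mu_{\min}$ one can have $\log T\ll S\log A$. The paper avoids any union bound over policies: for each phase $k$ it bounds the conditional failure probability given $\{\pi_k=\pi,\,t_k=t\}$ by $\frac{\delta}{2T}$ (the bound for a fixed policy and fixed starting time, with a union bound only over the at most $t$ possible path lengths), and then sums these conditional probabilities weighted by $\mathbb{P}\{\pi_k=\pi,\,t_k=t\}$, which sums to one over $(\pi,t)$; the only multiplicative cost is the number of phases, at most $T$. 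You would need to adopt this conditioning device (or your alternative martingale argument, which you mention but do not carry out) rather than paying the $A^S$ factor.
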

\begin{proof}
Let us first consider an arbitrary fixed policy $\pi$ and some time step~$t$.
Using (column) vector notation $\bmu:=(\mu_{\pi}(s))_s$ and $\br:=(r(s,\pi(s))_s$ for the stationary distribution and the reward function under $\pi$, and writing $\bhmu$ and $\bhr$ for the respective estimated values at step $t$, we have 
\begin{eqnarray}\label{eq:err2}
      \rho_{\pi} - \hat{\rho}_\pi  &=&  \bmu^\top \br - \bhmu^\top \bhr
          \,=\,  (\bmu - \bhmu)^\top \br + \bhmu^\top (\br - \bhr). 
\end{eqnarray}
Let $n$ be the length of the sample path $\mathcal P_\pi$ from which the estimates are computed. 
Then the first term of \eqref{eq:err2} can be bounded by Corollary~\ref{cor-ci} as
\begin{eqnarray}\label{eq:err3}
    |(\bmu - \bhmu)^\top \br| \leq \sqrt{\frac{9 \tm \log \frac{8tT}{\delta}}{2n}}
\end{eqnarray}
with probability at least $1-\frac{\delta}{4T}$ (using a union bound over all $t$ possible values for $n$).
The second term of \eqref{eq:err2} can be written as
\begin{equation*}
  | \bhmu^\top (\br-\bhr) | = \tfrac{1}{n} \cdot \Big| \hspace{-6mm} \sum_{(s,\pi(s),r,s')\in\mathcal{P}_\pi}\hspace{-6mm}  (r(s,\pi(s)) - r) \Big|.
\end{equation*}
Since the sum is a martingale difference sequence, we obtain by Azuma-Hoeffding inequality (cf.\ Lemma A.7 in \cite{celu}) and another union bound that with probability $1-\frac{\delta}{4T}$
\begin{equation}\label{eq:err4}
  | \bhmu^\top (\br-\bhr) |  \leq \sqrt{\frac{\log\tfrac{8tT}{\delta}}{2n}}.
\end{equation}

Summarizing, we get from \eqref{eq:err2}--\eqref{eq:err4} that for any policy $\pi$ 
the estimate $\hat{\rho}_\pi$ computed at time step $t$ satisfies with probability at least $1-\frac{\delta}{2T}$ 
\begin{eqnarray*}\label{opt}
    {\rho}_\pi &\leq& \hat{\rho}_{\pi}  +  \sqrt{\frac{8\tm \log \tfrac{8tT}{\delta}}{n}}.
\end{eqnarray*}
Accordingly, 
writing $t_k$ for the time step when phase $k$ starts and $n_{\pi,t}$ for the length of the sample path for policy $\pi$ at step $t$,
\begin{equation*}
   \mathbb{P}\left\{ {\rho}_{\pi_k} > \hat{\rho}_{\pi_k}  +  \sqrt{\tfrac{8\tm \log \tfrac{8tT}{\delta}}{n_{\pi_k,t}}} \; \Bigg| \; \pi_k=\pi, t_k=t \right\} < \frac{\delta}{2T}.
\end{equation*}
It follows that
\begin{eqnarray*}
 \lefteqn{ \mathbb{P}\big\{ \exists k :  \tilde\rho_{\pi_k} < \rho_{\pi_k} \big\}  \leq \sum_k \mathbb{P}\big\{ \tilde\rho_{\pi_k} < \rho_{\pi_k} \big\} }\\
&\leq& \sum_k \sum_t \sum_\pi \mathbb{P}\left\{ {\rho}_{\pi_k} > \hat{\rho}_{\pi_k}  +  \sqrt{\tfrac{8\tm \log \tfrac{8tT}{\delta}}{n_{\pi_k,t}}} \; \Bigg| \; \pi_k=\pi, t_k=t \right\} \cdot  \mathbb{P}\big\{\pi_k=\pi, t_k=t \big\}\\
&\leq&  \sum_k   \sum_t \sum_\pi \frac{\delta}{2T} \cdot \mathbb{P}\big\{\pi_k=\pi, t_k=t \big\}\\
&=&  \sum_k \frac{\delta}{2T}  \sum_t \sum_\pi \mathbb{P}\big\{\pi_k=\pi, t_k=t \big\}\\
&=&  \sum_k \frac{\delta}{2T}\\
&\leq& \frac{\delta}{2}\,.
\end{eqnarray*}

\end{proof}

\subsubsection{Splitting regret into phases}
Lemma~\ref{lem:conf} implies that in each phase $k$ with high probability
\begin{equation*}\label{eq:optimism}
  \tilde{\rho}_{\pi_k} \,\geq \, \tilde{\rho}_{\pi^*} \,\geq\, \rho_{\pi_k}.
\end{equation*}
Accordingly, we can split and bound the regret as a sum over the single phases and obtain that with probability at least $1-\frac{\delta}{2}$, 
\begin{eqnarray}
   {R}_T &=&     T \rho^* - \sum_{t=1}^T r_t \,=\,  \sum_k n_k (\rho^* - \hat{\rho}_{\pi_k})  
      \,\leq\, \sum_k {n_k}(\tilde{\rho}_{\pi_k} -  \hat{\rho}_{\pi_k}) \nonumber \\
      &\leq&  \sum_k n_k \sqrt{\frac{8\tm \log \tfrac{8T^2}{\delta}}{n_{<k}}}. \label{eq:r0}
\end{eqnarray}
Now we can distinguish between two kinds of phases: The length of most phases is $n_k=n_{<k}$.
However, there there are also a few phases where the sample path for the chosen policy $\pi_k$ is shorter than $\sqrt{\frac{T}{SA}}$, when the length is $n_k=\sqrt{\frac{T}{SA}}>n_{<k}$. Let $\K^-:=\{k\,|\,n_k>n_{<k}\}$ be the set of these latter phases and set $K^-:=|\mathcal K^-|$.  
The regret for each phase in $\K^-$ is simply bounded by $\sqrt{\frac{T}{SA}}$, while for phases 
$k\notin\K^-$ we use\footnote{The final phase may be shorter than $n_{<k}$.} $n_k\leq n_{<k}$ to obtain from \eqref{eq:r0} that
\begin{eqnarray}\label{eq:r00}
    {R}_T &\leq&   K^- \sqrt{\frac{T}{SA}} + \sum_{k\notin\K^-} \sqrt{8 n_k \tm \log \tfrac{8T^2}{\delta}} 
\end{eqnarray}
with probability at least $1-\frac{\delta}{2}$.

It remains to bound the number of phases (not) in $\K^-$. A bound on $K^-$ obviously gives a bound on the first term in~\eqref{eq:r00}, while a bound on the number $K^+$ of phases not in $\K^-$ allows to bound the second term, as by Jensen's inequality we have due to $\sum_{k\notin \K^-} n_k \leq T$ that
\begin{equation}\label{eq:r2}
   \sum_{k\notin\K^-}  \sqrt{8 n_k \tm \log \tfrac{8T^2}{\delta}}  
        \leq  \sqrt{8 T K^+ \tm \log \tfrac{8T^2}{\delta}}.
\end{equation}

\subsubsection{Bounding the number of phases}
The following lemma gives a bound on the total number of phases that can be used as a bound on $K^-$ and $K^+$ to conclude the proof of Theorem~\ref{thm}.
\begin{lemma}\label{lem:episodes}
With probability at least $1-\frac{\delta}{2}$, the number of phases up to step~$T$ is bounded by 
\begin{equation*}
 K \leq SA \log_{\frac{4}{3}}\left(\tfrac{T}{SA}\right),
\end{equation*}
provided that $T\geq S^3 A\left(\frac{152 \tm \log \frac{8T^2}{\delta}}{\mu_{\min}^2}\right)^2$,
where $\mu_{\min}:=\min_{\pi,s:\mu_\pi(s)>0}\mu_\pi(s)$.
\end{lemma}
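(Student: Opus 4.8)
The plan is to count phases by a charging argument: show that each time a policy is selected, the number of samples available for it (measured through the right ``bottleneck'' quantity) has grown by at least a factor $\tfrac43$ since the previous phase that hit the same bottleneck, and then charge each phase to a state--action pair.

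For a policy $\pi$ write $N_t(s,a)$ for the number of observations of the form $(s,a,\cdot,\cdot)$ in $\mathcal{O}$ at step $t$, and set the bottleneck quantity $\phi_t(\pi):=\min_s \tfrac{N_t(s,\pi(s))}{\mu_\pi(s)}$. The first step is to relate $|\mathcal{P}_\pi|$ to $\phi_t(\pi)$. Here one uses the Markov property: the states visited along $\mathcal{P}_\pi$ form a trajectory of the chain induced by $\pi$ in which one may take at most $N_t(s,\pi(s))$ steps from state $s$ before running out of samples, so Algorithm~\ref{alg2} stops precisely when the budget of the state $s'$ with the smallest ratio $N_t(s',\pi(s'))/\mu_\pi(s')$ is exhausted. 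Since the number of visits to $s'$ along the path then equals $N_t(s',\pi(s'))\ge\mu_{\min}\phi_t(\pi)$, this already gives the unconditional bound $|\mathcal{P}_\pi|\ge\mu_{\min}\phi_t(\pi)$; and, writing $|\mathcal{P}_\pi|=N_t(s',\pi(s'))/\hat\mu(s')$ and applying Corollary~\ref{cor} to the trajectory $\mathcal{P}_\pi$ (this is where the hypothesis on $T$ enters: it forces the total variation error to be at most $\mu_{\min}/2$ once the path has length at least $\sqrt{T/SA}$), one gets the two-sided bound $\tfrac23\phi_t(\pi)\le|\mathcal{P}_\pi|\le 2\phi_t(\pi)$ for every phase not in $\mathcal{K}^{-}$.

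Next I would establish the geometric growth. In phase $k$ the policy $\pi_k$ is executed for $n_k\ge\max\{n_{<k},\sqrt{T/SA}\}$ steps, a trajectory that is always long enough for Corollary~\ref{cor}, so each state $s$ is visited at least $n_k(\mu_{\pi_k}(s)-\mu_{\min}/2)$ times; hence every count $N(s,\pi_k(s))$ grows by that amount and, using $\mu_{\pi_k}(s)\ge\mu_{\min}$, the quantity $\phi(\pi_k)$ grows by at least $\tfrac{n_k}{2}$. Combining this with $n_k\ge n_{<k}=|\mathcal{P}_{\pi_k}|\ge\tfrac23\phi_{t_k}(\pi_k)$ for a phase outside $\mathcal{K}^{-}$ gives $\phi(\pi_k)\ge\tfrac43\phi_{t_k}(\pi_k)$ after the phase, i.e. the count $N(s^{\mathrm{bn}},\pi_k(s^{\mathrm{bn}}))$ of the bottleneck pair $s^{\mathrm{bn}}=\arg\min_s N_{t_k}(s,\pi_k(s))/\mu_{\pi_k}(s)$ is multiplied by at least $\tfrac{2\mu_{\min}}{\mu_{\min}+\mu_{\min}/2}=\tfrac43$; for a phase in $\mathcal{K}^{-}$, where $n_k=\sqrt{T/SA}>n_{<k}$ and the path is too short for Corollary~\ref{cor}, one needs a separate but similarly flavoured argument (using $|\mathcal{P}_{\pi_k}|\ge\mu_{\min}\phi_{t_k}(\pi_k)$ together with $n_k=\sqrt{T/SA}$). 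Making this last case go through cleanly, and isolating the exact constant $\tfrac43$, is the part I expect to be the main obstacle.

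Finally, charge phase $k$ to the state--action pair $(s^{\mathrm{bn}},\pi_k(s^{\mathrm{bn}}))$. By the previous step, between two phases charged to the same pair its observation count increases by a factor $\tfrac43$, so a pair with final count $N_T(s,a)$ is charged at most $\log_{4/3}N_T(s,a)$ times (up to the very first charge to it). Since every one of the $T$ steps produces exactly one observation, $\sum_{(s,a)}N_T(s,a)=T$ spread over at most $SA$ pairs, and concavity of the logarithm (Jensen's inequality) yields $K\le\sum_{(s,a)}\log_{4/3}N_T(s,a)\le SA\log_{4/3}(T/SA)$. A union bound over the at most $T$ phases and the at most $T$ possible path lengths, with the failure probability of each concentration event taken to be $\delta/(4T^2)$, keeps the overall failure probability below $\tfrac{\delta}{2}$, which is exactly what makes $\log\tfrac{8T^2}{\delta}$ appear in the threshold on $T$ and completes the proof.
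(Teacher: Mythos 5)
Your overall strategy --- charge each phase to a state--action pair whose observation count is forced to grow by a factor $\tfrac43$, then conclude $K\le SA\log_{4/3}(T/SA)$ by a pigeonhole/Jensen count --- is exactly the paper's, and for phases outside $\K^-$ your argument is essentially a reparametrization of the paper's through the quantity $\phi_t(\pi)$. One side remark: your claim that Algorithm~\ref{alg2} terminates \emph{precisely} at the state minimizing $N_t(s,\pi(s))/\mu_\pi(s)$ is not justified (the path simply ends at whichever state first runs out of unused samples, which need not be the argmin because of fluctuations), but this does not hurt you, since for \emph{any} exhausted state $s^-$ one has $N_t(s^-,\pi(s^-))/\mu_\pi(s^-)\ge\phi_t(\pi)$ by definition of the minimum, which is all your lower bound on $|\mathcal P_\pi|$ actually uses.

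The genuine gap is the one you flag yourself: the phases in $\K^-$. Your route requires the empirical distribution $\hmu_{<k}$ of the \emph{sample path} to satisfy $d_{TV}(\mu_{\pi_k},\hmu_{<k})\le\mu_{\min}/2$, which you can only guarantee when $n_{<k}\ge\sqrt{T/SA}$; for $k\in\K^-$ your fallback $|\mathcal P_{\pi_k}|\ge\mu_{\min}\,\phi_{t_k}(\pi_k)$ together with $n_k=\sqrt{T/SA}>n_{<k}$ only yields a growth factor of order $1+\mu_{\min}/2$, which would cost an extra $1/\mu_{\min}$ in the bound on $K^-$ and hence in the regret --- so the constant $\tfrac43$ is genuinely not established there. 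The paper closes this uniformly for all phases by never requiring the sample-path concentration to be ``good'': it only invokes the raw inequality $\hmu_{<k}(s^-)\,n_{<k}\le\mu_{\pi_k}(s^-)\,n_{<k}+\sqrt{38\,n_{<k}\,S\,\tm\log\tfrac{8T^2}{\delta}}$, which is a valid confidence bound for any path length $n_{<k}$, however small, and then absorbs the square-root term into $\tfrac{\mu_{\pi_k}(s^-)}{2}\,n_k$ using that the \emph{phase} length always satisfies $n_k\ge\max\{n_{<k},\sqrt{T/SA}\}$ --- this is exactly where the hypothesis on $T$ enters. Only the accuracy $d_{TV}(\mu_{\pi_k},\hmu_k)\le\mu_{\min}/2$ for the phase trajectory (which is always long enough) is needed, giving $\mu_{\pi_k}(s^-)\le2\hmu_k(s^-)$ and hence $\hmu_{<k}(s^-)\,n_{<k}\le3\hmu_k(s^-)\,n_k$, i.e.\ the factor $\tfrac43$ for every phase including those in $\K^-$. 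With that substitution your argument goes through; as written, the $\K^-$ case remains open.
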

\begin{proof}
Let $n_{<k}(s,a)$ be the number of visits to $(s,a)$ before phase $k$. Note that the sample path for each policy $\pi$ in general will not use all samples of $(s,\pi(s))$, so that we also introduce the notation $n^\pi_{<k}(s)$ for the number of samples of $(s,\pi(s))$ used in the sample path of $\pi$ computed before phase~$k$. Note that by definition of the algorithm, sample paths are non-extendible, so that for each $\pi$ there is a state $s^-$ for which all samples are used,\footnote{In particular, this holds for the last state of the sample path.} that is, $n_{<k}(s^-,\pi(s^-))=n_{<k}^\pi(s^-)$.
We write $\hmu_{<k}$ and $\hmu_{k}$ for the empirical distributions of the policy $\pi_k$ in the sample path of and in phase~$k$, respectively.

Note that for each phase $k$ we have
\begin{eqnarray}
d_{TV}(\mu_{\pi_k},\hmu_{<k}) &\leq& \sqrt{\frac{38 S \tm \log \frac{8T^2}{\delta}}{n_{<k}}} \quad \mbox{ and
}\label{eq:conf-ep1} \\
  d_{TV}(\mu_{\pi_k},\hmu_k) &\leq& \sqrt{\frac{38 S \tm \log \frac{8T^2}{\delta}}{n_k}},\label{eq:conf-ep2}
\end{eqnarray}
each with probability at least $1-\frac{\delta}{4T}$ by Corollary~\ref{cor} and a union bound over all possible values of $n_k$ and $n_{<k}$, respectively. By another union bound over the at most $T$ phases, \eqref{eq:conf-ep1} and \eqref{eq:conf-ep2}
hold for all phases~$k$ with probability at least $1-\frac{\delta}{2}$.
In the following, we assume that the confidence intervals of \eqref{eq:conf-ep1} and \eqref{eq:conf-ep2} hold, so that all following results hold with probability $1-\frac{\delta}{2}$.

Each phase $k$ has length at least $n_k \geq\sqrt{\frac{T}{SA}}$. Consequently, if $T\geq$ $S^3 A\Big(\frac{152 \tm \log \frac{8T^2}{\delta}}{\mu_{\min}^2}\Big)^2$, then it is guaranteed by \eqref{eq:conf-ep2} that in each phase $k$ it holds that $d_{TV}(\mu_{\pi_k},\hmu_k) \leq \frac{\mu_{\min}}{2}\leq \frac{\mu_{\pi_k}(s)}{2}$ and therefore for each state $s$
\begin{equation}\label{eq:a}   
  \frac{\mu_{\pi_k}(s)}{2}  \, \leq \,  \hmu_k(s).  
\end{equation}
 
Now consider an arbitrary phase $k$ and let $s^-$ be the state for which $n_{<k}(s^-,\pi_k(s^-))=n^{\pi_k}_{<k}(s^-)$, so that
in particular $\hmu_{<k}(s^-)\,n_{<k}=n^{\pi_k}_{<k}(s^-)$. 
We are going to show that the number of visits to $(s^-,\pi_k(s^-))$ is increased by (at least) a factor $\frac{4}{3}$ in phase~$k$.
By \eqref{eq:conf-ep1}--\eqref{eq:a} and using that $n_k\geq n_{<k}$ we have 
\begin{eqnarray*}
\hmu_{<k}(s^-)\,n_{<k} &\leq& \mu_{\pi_k}(s^-)\, n_{<k} + \sqrt{38n_{<k} S \tm \log \tfrac{8T^2}{\delta}} \\
&\leq& 2 \hmu_k(s^-)\, n_{<k} + \sqrt{38 n_{<k} S \tm \log \tfrac{8T^2}{\delta}} \\
&\leq& 2 \hmu_k(s^-)\, n_k + \sqrt{38 n_k S \tm \log \tfrac{8T^2}{\delta}} \\
&\leq& 2 \hmu_k(s^-)\, n_k + \frac{\mu_{\pi_k}(s^-)}{2} n_k  \\
&\leq& 3 \hmu_k(s^-)\, n_k,
\end{eqnarray*}
so that abbreviating $a^-:=\pi_k(s^-)$
\[
   n_{<k+1}(s^-,a^-) \,=\, n_{<k}(s^-,a^-) + \hmu_{k}(s^-)\,n_k 
      \,\geq\,  \tfrac{4}{3} \, n_{<k}(s^-,a^-).
\]
Hence in each phase there is a state-action pair for which the number of visits is increased by a factor of $\frac{4}{3}$. This can be used to show that the total number of phases $K$ within $T$ steps is upper bounded as
\begin{equation}\label{eq:ep}
 K \leq SA \log_{\frac{4}{3}}\left(\tfrac{T}{SA}\right).
\end{equation}
The proof of \eqref{eq:ep} can be rewritten from Proposition 3 in \cite{ort}, with the only difference that the factor 2 is replaced by $\frac{4}{3}$. 
\end{proof}

Finally, combining \eqref{eq:r00}, \eqref{eq:r2}, and Lemma~\ref{lem:episodes}, using that $K^-,K^+ \leq K$, we obtain
that with probability at least $1-\delta$
\begin{eqnarray*}
   {R}_T &\leq&   K^- \sqrt{\frac{T}{SA}} + \sum_{k\notin\K^-} \sqrt{8 n_k \tm \log \tfrac{8T^2}{\delta}} \\
         &\leq&  K^- \sqrt{\frac{T}{SA}} + \sqrt{8 T K^+ \tm \log \tfrac{8T^2}{\delta}} \\
         &\leq&  \sqrt{SAT} \log_{\frac{4}{3}}\left(\tfrac{T}{SA}\right)  + \sqrt{8\tm SAT\log_{\frac{4}{3}}\left(\tfrac{T}{SA}\right) \log \big(\tfrac{8T^2}{\delta}\big)}  , 
\end{eqnarray*}
which completes the proof of the theorem. \qed

\section{Discussion and Conclusion}
While we were able to close the gap between lower and upper bound on the regret for uniformly ergodic MDPs, there are still quite a few open questions.
First of all, the concentration results we use are only available for uniformly ergodic Markov chains, so a
generalization of our approach to more general communicating MDPs seems not easy. An improvement over the
parameter~$\tm$ may be possible by considering more specific concentration results for Markov reward processes. These
might depend not so much on the mixing time than the bias span~\cite{ronan}. However, even if one achieves such bounds,
the resulting regret bounds would depend on the maximum bias span over all policies. Obtaining a dependence on the bias
span of the optimal policy instead seems not easily possible. Finally, another topic for future research is to develop
an optimistic policy gradient algorithm that computes the optimistic policy more efficiently than by an iteration over
all policies.

\paragraph{Acknowledgments.} 
The author would like to thank Ronan Fruit, Alessandro Lazaric, and Matteo Pirotta for discussion
as well as Sadegh Talebi and three anonymous reviewers for pointing out errors in a previous version of the paper. 

\small
\bibliographystyle{abbrv}

\end{document}